\theoremstyle{plain}
\theoremstyle{definition}
\theoremstyle{remark}
\DeclareMathOperator*{\argmax}{arg\,max}
\newcommand{\halfmid}{\, \vert \,}
\newcommand{\designnet}{\pi_\phi}
\newcommand{\policy}{\pi}
\newcommand{\latent}{\theta}
\newcommand{\prior}{p(\latent)}
\newcommandx{\histmarg}[1][1=\designnet]{p(h_T \halfmid #1)}
\newcommandx{\histlik}[1][1=\designnet]{p(h_T \halfmid \latent, #1)}
\newcommandx{\histliki}[2][2=\designnet]{p(h_T \halfmid \latent_{#1}, #2)}
\newcommand{\R}{\mathbb{R}}
\newcommand{\E}{\mathbb{E}}
\def\gI{\mathcal{I}}
\def\gL{\mathcal{L}}
\def\gT{\mathcal{T}}
\def\gU{\mathcal{U}}
\icmltitlerunning{Step-DAD: Semi-Amortized Policy-Based Bayesian Experimental Design}
\begin{document}

\twocolumn[
\icmltitle{Step-DAD: Semi-Amortized Policy-Based Bayesian Experimental Design}



\icmlsetsymbol{equal}{*}

\begin{icmlauthorlist}
\icmlauthor{Marcel Hedman}{equal,oxford}
\icmlauthor{Desi R. Ivanova}{equal,oxford}
\icmlauthor{Cong Guan}{oxford}
\icmlauthor{Tom Rainforth}{oxford}
\end{icmlauthorlist}

\icmlaffiliation{oxford}{Department of Statistics, University of Oxford}

\icmlcorrespondingauthor{Marcel Hedman}{marcel.hedman@stats.ox.ac.uk}
\icmlcorrespondingauthor{Desi R. Ivanova}{desi.ivanova@stats.ox.ac.uk}

\icmlkeywords{Bayesian experimental design, Bayesian optimal design, Bayesian adaptive design, adaptive design optimization, information maximization}

\vskip 0.3in
]

\printAffiliationsAndNotice{\icmlEqualContribution} 

\begin{abstract}
We develop a semi-amortized, policy-based, approach to Bayesian experimental design (BED) called Stepwise Deep Adaptive Design (Step-DAD).
Like existing, fully amortized, policy-based BED approaches, Step-DAD trains a design policy upfront before the experiment. 
However, rather than keeping this policy fixed, Step-DAD periodically updates it as data is gathered, refining it to the particular experimental instance.
This test-time adaptation improves both the flexibility and the robustness of the design strategy compared with existing approaches.
Empirically, Step-DAD consistently demonstrates superior decision-making and robustness compared with current state-of-the-art BED methods.
\end{abstract}

\vspace{-10pt}
\section{Introduction}
\vspace{-4pt}

Adaptive experimentation plays a crucial role in science and engineering: it enables targeted and efficient data acquisition by sequentially integrating information gathered from past experiment iterations into subsequent design decisions~\citep{mackay1992information,atkinson2007optimum,myung2013}. 
For example, consider an online survey that aims to infer individual preferences through personalized questions. 
By strategically tailoring future questions based on insights from past responses, the survey can rapidly hone in on relevant questions for each specific individual, enabling precise preference inference with fewer, more targeted questions.

Bayesian experimental design (BED) offers a principled framework for solving such optimal design problems~\citep{chaloner1995,ryan2016review,rainforth2024modern}. 
In BED, the quantity of interest (e.g. individual preferences), is represented as an unknown parameter $\theta$ and modelled probabilistically through a joint generative model on $\theta$ and experiment outcomes given designs. 
The goal is then to choose designs that are maximally informative about $\theta$. 
Namely, we maximize the \emph{Expected Information Gain} (EIG)~\citep{lindley1956,lindley1972}, which measures the expected reduction in our uncertainty about $\theta$ from running an experiment with a given design.

The\emph{ traditional} adaptive BED approach, illustrated in Fig~\ref{fig:bed_traditional}, involves iterating between making design decisions by optimizing the EIG of the next experiment step, and updating the underlying model through Bayesian updates that condition on the data obtained so far. 
Unfortunately, this approach leads to sub-optimal design decisions, as it is a greedy, myopic, strategy that fails to consider future experiment steps~\citep{huan2016sequential,foster2021variational}. 
Furthermore, it requires substantial computation to be undertaken at each experiment iteration, making it impractical for real-time applications~\citep{rainforth2024modern}.

\citet{foster2021dad} showed that this traditional framework can be significantly improved upon by taking a \emph{policy-based} approach (PB-BED).
As shown in Fig~\ref{fig:bed_fully_amortized}, their Deep Adaptive Design (DAD) framework, and its  extensions~\citep{ivanova2021implicit, blau2022optimizing,lim2022policybased}, 
are based on learning a \emph{design policy network} that maps from experimental histories to new designs.
This policy  is trained  before the experiment, then deployed to make design decisions automatically at test time.
This provides a \emph{fully amortized} approach that eliminates the need for significant computation during the experiment itself, thereby enabling real-time, adaptive, and non-myopic design strategies that represent the current state-of-the-art in adaptive BED.

In principle, these fully amortized approaches can learn theoretically optimal design strategies (in terms of total EIG).
In practice, learning a policy that remains optimal for all possible experiment realizations is rarely realistic.
In particular, the dimensionality of experimental history expands as the experiment progresses, making it increasingly difficult to account for all possible eventualities through upfront training alone.
Moreover, deficiencies in our model can mean that data observed in practice can be highly distinct from the simulated data used to train the policy.

To address these limitations, and allow utilisation of any available computation during the experiment, we introduce a hybrid, \emph{semi-amortized}, PB-BED approach, called \emph{Stepwise Deep Adaptive Design} (Step-DAD).
As illustrated in Fig~\ref{fig:bed_semi_amortized}, Step-DAD periodically updates the policy during the experiment.
This allows the policy to be adapted using previously gathered data, refining it to maximize performance for the particular realization of the data that we observe.
In turn, this allows Step-DAD to make more accurate design decisions and provides significant improvements in robustness to observing data that is dissimilar to that generated in the original policy training.
Empirical evaluations reveal Step-DAD is able to provide significant improvements in state-of-the-art design performance, while using substantially less computation than the traditional BED approach.

\input{1_intro_fig1}
\vspace{-2pt}
\section{Background}\label{sec:background}
\vspace{-2pt}

Guided by the principle of information maximization, Bayesian experimental design \citep[BED,][]{lindley1956} is a model-based framework for designing optimal experiments.
Given a model $\prior p(y \mid \theta, \xi)$, describing the relationship between experimental outcomes $y$, controllable designs $\xi$ and unknown parameters of interest $\theta$, 
the goal is to select the experiment $\xi$ that maximizes the expected information gain (EIG) about $\theta$.
The EIG, which is equivalent to the mutual information between $\theta$ and $y$, is the expected reduction in Shannon entropy from the prior to the posterior of $\theta$:
\vspace{-2pt}
\begin{equation}
    I(\xi, y) = \E_{p(y|\xi)} [ H[p(\theta)] - H[p(\theta \mid \xi, y)] ], \nonumber
\vspace{-2pt}
\end{equation}
where $p(y \mid \xi) = \E_{\prior}[p(y \mid \theta,\xi)]$ is the prior predictive distribution of our model.

\vspace{-2pt}
\subsection{Traditional Adaptive BED} \label{sec:background_traditional}
\vspace{-2pt}
BED becomes particularly powerful in adaptive contexts, where we allow the future design decision at time $t$, $\xi_{t}$, to be informed by the data acquired up to that point, $h_{t-1} \coloneqq (\xi_1, y_1), \dots, (\xi_{t-1}, y_{t-1})$, which we refer to as the \emph{history}.
In the traditional adaptive BED framework~\citep{ryan2016review}, this is done by assimilating the data into the model by fitting the posterior $p(\theta \mid h_{t-1})$, followed by the maximization of the one-step ahead, or \emph{incremental}, EIG
\vspace{-2pt}
\begin{equation}
    I^{h_{t-1}}(\xi_{t}) = \E
    \big[H[p(\theta \mid h_{t-1})] - H[p(\theta \mid h_{t})] \big], \label{eq:incremental_EIG}
\vspace{-2pt}
\end{equation}
where the expectation is taken with respect to the marginal distribution $p(y \mid \xi_{t}, h_{t-1}) = \E_{p(\theta \mid h_{t-1})} [p(y \mid \theta, \xi_{t}, h_{t-1})]$. 
We use the superscript $h_{t-1}$ to emphasize conditioning on the history currently available, setting $h_0 = \varnothing$.
This is a \emph{closed-loop} approach~\citep{foster2022thesis,huan2016sequential}, explicitly integrating all of the acquired data to refine beliefs about $\theta$ and inform subsequent design decisions.

Whilst this traditional framework offers a principled and systematic way to optimize experimental designs, it comes with some limitations. 
One drawback is its myopic nature that greedily maximizes for the next best design and overlooks the impact of future experiments, ultimately leading to sub-optimal design decisions. 
Another limitation is the significant computational expense incurred from the iterative posterior inference and EIG optimization. 
In general, the posterior computation is intractable and the EIG~\eqref{eq:incremental_EIG} estimation is \emph{doubly intractable}~\citep{rainforth2018nesting,foster2019variational}. Since both of these steps must be conducted at each step of the experiment, the traditional adaptive BED approach is often impractical for real-time applications.

\vspace{-2pt}
\subsection{Amortized Policy-Based BED}

In response to the limitations of traditional adaptive BED, \citet{foster2021dad} introduce the idea of amortizing the adaptive design process through learnt policies. 
This amortized policy-based BED (PB-BED) approach represents a significant advancement over the traditional framework, delivering state-of-the-art non-myopic design optimization whilst enabling real-time deployment.

PB-BED reformulates the design problem using a policy $\pi$, which maps experimental histories to subsequent design choices, $\pi: h_{t-1} \mapsto \xi_{t}$.
The optimal policy is now the one that maximizes the \emph{total} EIG across the entire sequence of $T$ experiments~\citep{foster2021dad,shen2021bayesian}
\vspace{-8pt}
\begin{align}
    &\gI_{1\rightarrow T}(\policy) 
    = \E_{p(h_T|\pi)} \left[ H[p(\theta)] - H[p(\theta \mid h_{T})] \right] \label{eq:total_EIG} \\ &\hspace{10pt}= \E_{\prior\histlik[\policy]} \left[ \log \histlik[\policy] -  \log \histmarg[\policy] \right]  
     \label{eq:total_EIG_2}
\vspace{-20pt}
\end{align}
where $\histlik[\policy] = \prod_{t=1}^T p(y_t\mid \theta, \xi_t, h_{t-1})$, $\histmarg[\policy]=\E_{\prior}[\histlik[\policy]]$,  and $\xi_t = \policy(h_{t-1})$ are all evaluated autoregressively.
This policy-based formulation strictly generalizes the traditional adaptive BED approach, which can be viewed as using a specific policy that maximizes the incremental one-step-ahead EIG~\eqref{eq:incremental_EIG} at each iteration, that is $\pi_{\text{trad}} (h_{t-1}) = \arg\max_{\xi_t} I_{t-1\rightarrow t}^{h_{t-1}}(\xi_t)$.

Whilst the total EIG formulation~\eqref{eq:total_EIG} provides a unified training objective for the policy, it remains doubly intractable like the standard EIG. 
The original Deep Adaptive Design (DAD) method~\citep{foster2021dad} addressed this 
by using tractable variational lower bounds of the EIG~\citep{foster2019variational,foster2020unified,kleinegesse2020minebed}  coupled with stochastic gradient ascent (SGA) schemes to directly train a policy network taking the form of a deep neural network directly mapping from histories to design decisions.
It thus provided a foundation for conducting PB-BED in practice.

A number of extensions to the DAD approach have since been developed~\citep{ivanova2021implicit,blau2022optimizing,lim2022policybased}, broadening its applicability to a wider class of models by proposing alternative policy training schemes.
All share a core methodology, where the policy  is trained only once, offline, with  experimental histories simulated from the   model $\prior \histlik[\policy]$. 
Once trained, it remains unchanged during the live experiment and across multiple experimental instances (e.g. different survey participants), as illustrated in Fig~\ref{fig:bed_fully_amortized}.
This \emph{fully amortized} approach eliminates the need for posterior inference and EIG optimization at each step of the experiment, thereby allowing design decisions to be made almost instantly at deployment.
\vspace{-4pt}
\section{Semi-Amortized PB-BED}
\label{sec:semi}
\vspace{-2pt}

Fully amortized PB-BED methods enable real-time deployment and provide design decisions that are typically superior to those of the traditional framework.
However, there are many problems where we can afford to perform some test-time training during the experiment itself.
It is therefore natural to ask whether we can usefully exploit such computational availability to further improve the quality of our design decisions?
In particular, the fact that the current state-of-the-art approaches for design quality are all fully amortized suggests that improvements should be possible when this is not a computational necessity.

To address this, we note that the computational gains of fully amortized PB-BED methods comes at the cost of their inability to adapt the \emph{policy itself} in response to acquired experimental data. 
We argue that this rigidity leads to sub-optimal designs decisions, particularly in scenarios where the real-world experimental data significantly deviates from the simulated histories used during training of the policy. 
Two primary factors contribute to this issue:

\textbf{Imperfect training~} %
In fully amortized PB-BED we simulate experimental histories to try and learn a policy that will generalize across the entire experimental space---effectively learning a regressor from all possible histories to design decisions.
However, the effectiveness of any learner with finite data/training is inevitably limited, especially in regions of the input space where training data is sparse. 
In short, we are learning a policy to cover all possible histories we might see, but at deployment we are dealing only with a specific history that may be similar to few, if any, of the histories we simulated during training.
This challenge is particularly exacerbated in experiments with extended horizons, due to the high dimensionality of the resulting histories.
Additionally, the finite representational capacity of the policy hinders perfect approximation even with infinite data.
Together these lead to a discrepancy between the learned policy $\pi$ and the true optimal design strategy $\pi^*$, producing an \emph{approximation gap} for the learned policies.

\textbf{Double reliance on the generative model~} %
Fully amortized PB-BED relies on the generative model to both simulate experimental histories for policy training and to evaluate the success of our design decisions via the total resulting information gained.
In other words, we use the model in both the expectation and information gain elements of the EIG in~\eqref{eq:total_EIG}.
This dual reliance magnifies the consequences of model misspecifications~\citep{overstall2022bayesian,go2022robust}.
Moreover, even if the model is well-specified from a Bayesian inference perspective,
there might still be significant discrepancies between the prior-predictive distribution, $p(h_T | \pi)$, used to simulate data in the policy training and the true underlying data generating distribution.

The upshot of this is that we may see data at deployment that is highly distinct from any simulated during the policy training.
The lack of mechanisms for integrating real experimental data means that fully amortized approaches have no mechanism to overcome this issue.
This can be characterized as a form of \emph{generalization gap}---the learned policy fails to generalize to the real-world experimental conditions, due to its inability to integrate and respond to the actual experimental data gathered so far \citep{hastie2009elements}.

\vspace{-4pt}
\subsection{Online policy updating}

\looseness=-1
To address these limitations, we propose a \emph{semi-amortized} PB-BED framework, which introduces dynamic adaptability by allowing periodic updates to the policy during deployment in response to acquired experimental data. 
In short, it will update the original policy at one or more points during the experiment, refining it to maximize the EIG of the remaining steps, conditioned on the data gathered so far.

The motivation behind this semi-amortized framework is the intuition that while a fully amortized policy is a strong starting point, it can be significantly enhanced through targeted refinements leveraging gathered data.
Focusing for now on the case of a single policy update, the following proposition formalizes this intuition and lays the theoretical foundation for semi-amortized PB-BED.

\begin{restatable}[Decomposition of total EIG]{proposition}{eigdecomp}
	\label{thm:eig_decomposition}
   For any design policy $\policy$, the total EIG of a $T$-step experiment can be decomposed as
    \begin{equation}
        \gI_{1\rightarrow T} (\pi) = \gI_{1 \rightarrow \tau}(\pi) + 
        \E_{p(h_\tau \halfmid \pi)}[\gI^{h_\tau}_{\tau + 1\rightarrow T} (\pi)],
        \label{eq:eig_decomp}
    \end{equation}
    for any intermediate step $1\leq \tau \leq T$, where
    \begin{align}
    \vspace{-8pt}
    \begin{split}
    &\gI^{h_\tau}_{\tau + 1\rightarrow T} (\pi) = \\
    &~\E_{p(\theta|h_{\tau})p(h_{\tau+1:T} \halfmid h_\tau, \theta,\pi)} \left[ \log 
            \frac{
                    p(h_{\tau+1:T} \halfmid h_\tau, \theta,\pi)
                }{
                    p(h_{\tau+1:T} \halfmid h_\tau,\pi)
                } \right]. \label{eq:remaining_EIG}
    \end{split}
\end{align}
 \end{restatable}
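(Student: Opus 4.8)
The plan is to recognise that the total EIG in~\eqref{eq:total_EIG_2} is precisely the mutual information between $\theta$ and the full history $h_T$ under the data-generating process induced by $\policy$, so that~\eqref{eq:eig_decomp} is just an instance of the chain rule for mutual information applied to the split $h_T = (h_\tau, h_{\tau+1:T})$. Concretely, I would start from the log-ratio form $\gI_{1\rightarrow T}(\policy) = \E_{\prior \histlik[\policy]}[\log \histlik[\policy] - \log \histmarg[\policy]]$ and exploit the autoregressive factorisation of the history at the cut point $\tau$.

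The first step is to factorise both the likelihood and the marginal across the cut. Because the designs are generated autoregressively by the fixed policy, we have $p(h_T \halfmid \theta, \policy) = p(h_\tau \halfmid \theta, \policy)\,p(h_{\tau+1:T}\halfmid h_\tau,\theta,\policy)$ and likewise $p(h_T\halfmid\policy) = p(h_\tau\halfmid\policy)\,p(h_{\tau+1:T}\halfmid h_\tau,\policy)$. Substituting these into the log-ratio and using additivity of the logarithm splits the integrand into a term involving only the first $\tau$ steps and a term involving only the remaining steps.

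For the first-steps term, the integrand $\log\frac{p(h_\tau\halfmid\theta,\policy)}{p(h_\tau\halfmid\policy)}$ depends only on $(\theta,h_\tau)$, so taking the expectation over $\prior\,p(h_T\halfmid\theta,\policy)$ marginalises out $h_{\tau+1:T}$ and leaves $\E_{\prior\,p(h_\tau\halfmid\theta,\policy)}[\,\cdot\,]$, which is exactly $\gI_{1\rightarrow\tau}(\policy)$. For the remaining-steps term I would rewrite the sampling measure with Bayes' rule, $\prior\,p(h_\tau\halfmid\theta,\policy) = p(h_\tau\halfmid\policy)\,p(\theta\halfmid h_\tau)$, which re-expresses the full joint as $p(h_\tau\halfmid\policy)\,p(\theta\halfmid h_\tau)\,p(h_{\tau+1:T}\halfmid h_\tau,\theta,\policy)$. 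The expectation then factors into an outer expectation over $p(h_\tau\halfmid\policy)$ and an inner expectation that matches the definition of $\gI^{h_\tau}_{\tau+1\rightarrow T}(\policy)$ in~\eqref{eq:remaining_EIG} verbatim. Adding the two terms yields~\eqref{eq:eig_decomp}.

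The algebra is routine; the one point requiring care is the conditioning around the policy. In particular, I must justify that $p(\theta\halfmid h_\tau,\policy) = p(\theta\halfmid h_\tau)$ --- which holds because, given the observed history, the designs are deterministic functions of that history, so conditioning on $\policy$ adds no further information about $\theta$ --- and that the remaining-steps marginal $p(h_{\tau+1:T}\halfmid h_\tau,\policy)$ is the one obtained by marginalising $\theta$ under the posterior $p(\theta\halfmid h_\tau)$ rather than the prior. These are precisely the facts that make the inner expectation coincide with~\eqref{eq:remaining_EIG}, and I expect pinning down this bookkeeping to be the only genuine subtlety. Once it is settled, the decomposition follows immediately, and the argument extends to multiple cut points by iterating the single-cut identity.
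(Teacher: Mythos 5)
Your proposal is correct and takes essentially the same route as the paper's proof: factorize both $p(h_T \mid \theta, \pi)$ and $p(h_T \mid \pi)$ at the cut point $\tau$, substitute into the log-ratio form of the total EIG, split the logarithm, and apply Bayes' rule $p(\theta)\,p(h_\tau \mid \theta, \pi) = p(h_\tau \mid \pi)\,p(\theta \mid h_\tau)$ to identify the second term as $\E_{p(h_\tau \mid \pi)}[\gI^{h_\tau}_{\tau+1 \rightarrow T}(\pi)]$. The bookkeeping facts you flag as the only subtlety---that conditioning on $\pi$ adds nothing to the posterior given $h_\tau$, and that $p(h_{\tau+1:T} \mid h_\tau, \pi)$ is the posterior-averaged (not prior-averaged) likelihood---are precisely what the paper's rearrangement uses implicitly.
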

\begin{proof}
\vspace{-5pt}
    We can write the likelihood and marginal as
    \begin{align*}
        \histlik[\pi] 
        &= p(h_\tau \halfmid \theta, \policy) p(h_{\tau+1:T} \halfmid h_\tau, \theta,\pi) \\
        \histmarg[\pi] &=  p(h_\tau \halfmid \policy) p(h_{\tau+1:T} \halfmid h_\tau,\pi).
    \end{align*}
    \vspace{-5pt}
    Substituting in
    ~\eqref{eq:total_EIG} and rearranging now yields
    \begin{align*}
             \gI_{1\rightarrow T} (\pi)  &=
             \E_{\prior p(h_\tau | \theta, \policy)} \left[ \log 
            \frac{
                    p(h_\tau \halfmid \theta, \policy)
                }{
                    p(h_\tau \halfmid \policy)
                }\right] + \\ 
        & \hspace{-30pt} \E_{p(h_\tau | \policy) p( \theta | h_\tau) p(h_{\tau+1:T} | h_\tau, \theta, \pi)} \left[ \log 
            \frac{
                    p(h_{\tau+1:T} \halfmid h_\tau, \theta,\pi)
                }{
                    p(h_{\tau+1:T} \halfmid h_\tau,\pi)
                } \right]
    \end{align*}
    $=\gI_{1 \rightarrow \tau}(\pi) + 
        \E_{p(h_\tau \halfmid \pi)}[\gI^{h_\tau}_{\tau + 1\rightarrow T} (\pi)]$ as required.
\end{proof}
\vspace{-5pt}
This decomposition of the total EIG into two distinct components---the EIG accumulated up to an intermediate step $\tau$, and the expected EIG for subsequent steps conditional on the history at that point $h_{\tau}$---demonstrates that the optimality of a policy for the latter phases of the experiment, from $\tau+1$ to $T$, is solely determined by the model.

To see this, first note that, without loss of generality, we can break down the definition of our policy into how it behaves when given histories of length less than $\tau$ and when it is given longer histories, such that $\pi(h_t)=\pi_{0}(h_t)$ if $t<\tau$ and $\pi(h_t)=\pi_{\tau}(h_t)$ if $t \ge \tau$.
We can thus rewrite \eqref{eq:eig_decomp} as
\vspace{-5pt}
\begin{equation}
       \gI_{1\rightarrow T} (\pi) = \gI_{1 \rightarrow \tau}(\pi_{0}) + 
        \E_{p(h_\tau \halfmid \pi_{0})}[\gI^{h_\tau}_{\tau + 1\rightarrow T} (\pi_{\tau})]. \nonumber
\vspace{-5pt}
\end{equation}
Here the first term is independent of $\pi_{\tau}$, while $\pi_0$ affects the second one only through its influence on the distribution of $h_{\tau}$. At deployment time, we will have a specific $h_{\tau}$ once we reach step $\tau$ of the experiment, and so the conditional optimal policy for the remaining steps given the data gathered so far is  $\argmax_{\pi_{\tau}} \gI^{h_\tau}_{\tau + 1\rightarrow T} (\pi_{\tau})$, which is independent of~$\pi_0$.

Our semi-amortized framework is now based around exploiting this independence to refine the policy midway through the experiment by introducing a \textit{step design policy} $\pi^{s}$.  
Initially, $\pi^{s}$ uses the fully amortized policy $\pi_{0}$ for the first $\tau$ steps of the experiment.
Here $\pi_0$ trained as if would be used for the full experiment, such that we maximize its total EIG, $\gI_{1\rightarrow T} (\pi_0)$.
After step $\tau$, $\pi^{s}$ switches to a new policy $\pi_{\tau}$, which is trained to maximize the total \emph{remaining} EIG, $\gI^{h_\tau}_{\tau + 1\rightarrow T} (\pi_{\tau})$, as defined in~\eqref{eq:remaining_EIG}.

This gives us an \textbf{infer-refine} process for semi-amortization in PB-BED that mirrors the two stage procedure characteristic of traditional adaptive BED~(cf Fig.~\ref{fig:bed_traditional} and Fig.~\ref{fig:bed_semi_amortized}). 
The \textbf{infer} stage entails fitting the posterior distribution $p(\theta \halfmid h_\tau)$ with the data up to $\tau$. The subsequent \textbf{refine} stage learns a customized policy $\pi_{\tau}$ for the remaining steps of the experiment by maximizing~\eqref{eq:remaining_EIG}. 
It therefore allows for more effective design decisions than the fully amortized approach.
However, unlike the traditional BED approach, which is greedy and requires updates at every experimental step, our semi-amortized method offers a superior non-myopic design strategy and allows for selective updates.

It is important to acknowledge that this approach requires some computation during the experiment, which can pose challenges in applications where design decisions must be made very quickly. 
However, in many cases there is some computation time available, and our semi-amortized approach can exploit this, even if the available time is limited.
In particular, as we will show in subsequent sections, improvements to the policy can often be achieved with minimal additional training, such that substantial gains are often possible without drastically compromising deployment speed. 
As such, semi-amortized PB-BED still maintains large computational benefits over traditional adaptive BED.

\textbf{Multi-step policy updates~}
We can naturally extend our approach to include a multi-step update mechanism, noting that Proposition~\ref{thm:eig_decomposition} can be applied recursively to break down the total EIG into more segments.
To this end, we define a \emph{refinement schedule}, $\gT = \tau_0,  \tau_1, \cdots, \tau_K$---an increasing sequence defining the points at which the policy is refined.
We adopt the convention $\tau_0 = 0$ and $h_0 = \varnothing$, marking the offline optimization of the fully-amortized policy $\pi_{0}$.
For $\tau_k > 0$, we follow our two-stage infer-refine procedure.
In general, the more often we refine the policy, the better it will be (albeit with diminishing returns), at the cost of increasing the required deployment-time computation.

\vspace{-4pt}
\section{Stepwise Deep Adaptive Design}
\vspace{-4pt}
We introduce \textbf{Stepwise Deep Adaptive Design} (Step-DAD) to implement our semi-amortized PB-BED framework in practice.
Building on DAD and the infer-refine procedure outlined in the last section, Step-DAD employs stochastic gradient ascent schemes to optimize variational lower bounds on the remaining EIG~\eqref{eq:remaining_EIG} to sequentially train the step policy $\pi^{s}$ in a scalable manner.
 An overview of Step-DAD is presented in Algorithm~\ref{algo:method} in Appendix~\ref{app:algorithm}

The two key components of Step-DAD's aforementioned infer-refine procedure are an inference method for approximating $p(\theta|h_{\tau})$, and a refinement strategy for using this to update our policy.
Standard inference techniques (such as variational inference or Monte Carlo methods) can be used for the former as discussed in our experiments.
Our focus here will therefore instead be on our specialized procedure for policy refinement and the policy architecture itself.

\vspace{-2pt}
\subsection{Policy refinement} \label{sec:policy_refinement} 
\vspace{-2pt}

\looseness=-1
Due to its doubly intractable nature, the task of optimizing the remaining EIG, $\gI^{h_{\tau_k}}_{\tau_k + 1\rightarrow T} (\pi)$, presents a notable challenge. 
In selecting an appropriate scalable and efficient estimator for it, we wish to ensure compatibility with a wide range of inference schemes for
$p(\theta \halfmid h_{\tau_k})$.
Namely, as this serves as an updated `prior' during the policy refinement, it is important that we use an EIG estimator that does not require evaluations of the prior density, to ensure compatibility with sample-based inference schemes.

Lower bound estimators such as the explicit-likelihood-based sequential Prior Contrastive Estimator \citep[sPCE,][]{foster2021dad}, as well as the implicit likelihood InfoNCE~\citep{oord2018representation,ivanova2021implicit} and NWJ~\citep{nguyen2010nwj,kleinegesse2020minebed} bounds,
align with this requirement. 
For generative models with explicit likelihoods (implicit models are discussed in Appendix~\ref{app:implicit_models}) we therefore use the sPCE bound:%
\vspace{-5pt}
\begin{align}
\mathcal{L}^{h_{\tau_k}}_{\tau_k+1 \rightarrow T} (\policy) \!=\! \E
    \left[ 
        \log \frac{
            p(h_{\tau_k+1:T}| \theta_0, h_{\tau_k}, \policy)
        }{
            \frac{1}{L+1}\sum_{\ell=0}^L p(h_{\tau_k+1:T}| \theta_\ell, h_{\tau_k}, \policy)
        }
    \right]. \label{eq:stepdad_objective}
\vspace{-5pt}
\end{align}
Step-DAD parameterizes $\pi$ by a neural network and optimizes an appropriate objective, such 
as~\eqref{eq:stepdad_objective},
with respect to the network parameters using stochastic gradient ascent (SGA) schemes~\citep{robbins1951stochastic, kingma2014adam}. Following~\citet{foster2021dad}, we use path-wise gradients in the case of reparametrizable distributions~\citep{rezende2014stochastic,mohamed2020monte}, and score function (REINFORCE) otherwise~\citep{williams1992simple}.

\vspace{-4pt}
\subsection{Policy architecture} 
\vspace{-2pt}
\label{Subsection: Policy architecture}

Similar to~\citet{foster2021dad}, our policy architecture is based on individually embeding each design-outcome pair $(\xi_i, y_i) \in h_t$ into a fixed-dimensional representation, before aggregating them across $t$ to produce a summary vector. 
This allows for condensing varied-length experimental histories into a consistent dimensionality, to handle variable history sizes.
The summary vector is then mapped to the next experimental design $\xi_{t+1}$.
For the aggregation mechanism, the choice between permutation invariant and autoregressive architectures depends on the nature of the data. 
When the data $h_t$ is exchangeable, permutation invariant architectures like DeepSets \citep{zaheer2017deep} or SetTransformer \citep{lee2019set} are suitable. 
In contrast, sequential or time-series data would benefit from autoregressive models like transformers~\citep{vaswani2017attention}.

\looseness=-1
In principle, one could train an entirely new policy $ \pi_{{\tau_k}} $ at each refinement step $\tau_k$, 
potentially even varying the specific architecture between these.
Though such a strategy may occasionally be advantageous, we
instead, 
 propose a more pragmatic and lightweight approach: leveraging the already established fully amortized policy $\pi_{{0}}$ as a baseline and fine-tuning it for subsequent steps. 
In our experiments we do this using full fine-tuning of all policy parameters, but one could instead implement more parameter-efficient methods if needed, for example, only adjusting the last few layers.
\section{Related Work}

The idea of using a design policy in the context of adaptive BED was first proposed by \citet{huan2016sequential}. Leveraging dynamic programming principles, the policy they learn aims to establish a mapping from explicit posterior representations---serving as the \emph{state} in reinforcement learning (RL) terminology---to subsequent design choices. 
As a result, each iteration of the experiment necessitates substantial computational resources for updating the posterior.
The concept of fully amortized policy-based BED, which directly maps data collected to design decisions, has only recently been introduced \citep{foster2021dad} and subsequently extended to differentiable implicit models~\citep{ivanova2021implicit} and downstream tasks \citep{DaolangNEURIPS2024_c59f05d7}. 
While Step-DAD uses the policy training approach of DAD and iDAD based on direct SGA of variational bounds (e.g.~\eqref{eq:stepdad_objective}), our semi-amortized PB-BED framework is also compatible with more RL-based design policy training approaches, like those of \citet{blau2022optimizing} and \citet{lim2022policybased}, which are more suited to discrete design spaces.
We emphasize that none of these previous approaches have looked to refine the policy during the experiment itself. 

\looseness=-1
As discussed in \S\ref{sec:background_traditional}, adaptive BED has traditionally employed a two-step greedy strategy, involving posterior inference followed by an EIG optimization~\citep{price2018induced,kleinegesse2020sequential, foster2019variational,overstall2020bayesian,kleinegesse2018efficient,foster2020unified,myung2013,vincent2017darc,ryan2016review,kleinegesse2020minebed,huan2014gradient,kleinegesse2021gradientbased}.
While Step-DAD diverges from these in its EIG optimization by training policies instead of designs, it does share their need to perform posterior inference.
The {inference scheme} used 
by previous work has varied between problems and the needs of the underlying Bayesian model being used, with sequential Monte Carlo~\citep{del2006sequential,drovandi2014sequential,rainforth2017thesis,vincent2017darc} and likelihood-free ~\citep{Lintusaari2017,Sisson2018,dutta2016likelihood,huan2013simulation} inference schemes proving popular.
There is also growing recent interest in approaches that utilize inference itself as part of the EIG optimization.~\citep{amzal2006bayesian,iollo2025bayesianexperimentaldesigncontrastive, pmlr-v235-iollo24a, iqbal2024recursivenestedfilteringefficient, pmlr-v235-iqbal24a}.
Important considerations in choosing this scheme include the availability of an explicit likelihood, the ability to take derivatives, and computational budget.

The challenge of \emph{model misspecification} in BED remains a critical, but relatively underexplored, problem~\citep{overstall2022bayesian,farquhar2021statistical,rainforth2024modern,feng2015optimal,sloman2022characterizing,go2022robust}.
Fully amortized PB-BED is particularly vulnerable to model misspecification due to its reliance on a singular learning phase without the capacity to integrate real-world experimental feedback.
As we will see in the experiments, our semi-amortized PB-BED methodology, whilst not directly tackling the issue of misspecification, typically does enhance robustness to misspecification, due to enabling iterative data integration and policy refinement.

\looseness=-1
Finally, BED shares important connections to reinforcement learning~\citep{sutton2018reinforcement}. 
Most notably, it has been shown that the adaptive BED problem can be formulated as various forms of Markov Design Processes (MDPs~\citep{ross2007bayes,guez2012efficient,doshi2016hidden}), using the incremental EIG as the reward and either the posterior~\citep{huan2016sequential} or, more practically, the history as the state~\citep{foster2021variational,blau2022optimizing}.
Here PB-BED approaches are most closely linked with offline model-based RL~\citep{kidambi2020morel,yu2020mopo,ross2012agnostic,levine2020offline,moerland2023model}, in that they learn a policy upfront which can then be deployed.
However, PB-BED varies in many significant ways from typical RL settings.
For example, we \emph{do not have access to any data to train our policy}, but are instead focused on optimal sequential decision-making under a given model. We can also typically directly use SGA to train the policy as we have access to end-to-end differentiable objectives and our rewards are typically not sparse.
We also note that our extension of fully-amortized PB-BED to semi-amortized PB-BED is quite distinct to the typical generalization of offline RL approaches to hybrid RL approaches~\citep{song2023hybrid, ball2023efficient, zheng2022online}, as we are making refinements to the {local} policy during a {single} rollout using the same objective as original amortized policy.
Our approach of refining a learned policy as new data becomes available also shares some similarities with Model Predictive Control~\citep{MPCpaper}.

\vspace{-5pt}
\section{Experiments}
\label{sec: experiments}
\vspace{-5pt}

We empirically evaluate \textbf{Step-DAD} on a range of design problems, comparing its performance against \textbf{DAD} to determine the additional EIG achieved by the step policy $\pi^{s}$ over the fully amortized policy $\pi_{0}$. 
We further consider several other baselines for comparison.
\textbf{Static} design learns a fixed set of designs prior to the experiment by optimising a PCE bound~\citep{foster2020unified} that is equivalent to~\eqref{eq:stepdad_objective}, but which is defined in terms of the designs rather than policy parameters (i.e.~it learns a non-adaptive policy whose design choices are fixed).
The \textbf{Step-Static} baseline is a two-stage approach that first trains a set of $\tau$ static designs by optimizing a PCE bound on $\gI_{1\rightarrow \tau}(\xi_1,\dots,\xi_\tau)$, before approximating the posterior and training a new conditional set of static designs for the last $T-\tau$ steps by optimizing a PCE bound on $\gI_{\tau+1\rightarrow T}^{h_{\tau}}(\xi_{\tau+1},\dots,\xi_T)$. 
When possible, we include \textbf{Problem-Specific} baselines used for the relevant experiment before, instead considering a \textbf{Random} design strategy when not.
In all cases, the number of contrastive samples used during training was $L=1023$.

Our main metric for assessing the quality of various design strategies is the \textbf{total EIG}, $\gI_{1\rightarrow T}(\pi)$, as given in~\eqref{eq:total_EIG}. For the baselines, we approximate it via a version of the sPCE lower bound~\eqref{eq:stepdad_objective} with $L=10^5$ to ensure a tight bound, along with its upper bound counterpart---the sequential Nested Monte Carlo estimator \citep[sNMC,][]{foster2021dad} (see Appendix~\ref{app:implicit_models}).
For Step-DAD, we instead use a conservative lower bound estimate on its difference in performance compared to the original DAD network, before adding this to the corresponding DAD estimate (see Appendix~\ref{app:experiments_evals}).
This ensures any biases from using bounds instead of the true EIG lead to underestimation of the gains Step-DAD gives.
Full details on experimental details are provided in Appendix~\ref{app:experiments}.

\vspace{-3pt}
\subsection{Source Location Finding}
\label{subsec:Source_location_finding}
\vspace{-3pt}

We first consider the source location finding experiment from \citet{foster2021dad}, which draws upon the acoustic energy attenuation model, detailed in \citet{Sheng2005}. 
The objective of the experiment is to infer the locations of some hidden sources using noisy measurements, $y$, of their combined signal intensity. 
Each source emits a signal that decreases in intensity according to the inverse-square law.
A full description of the model is given in Appendix~\ref{app:location_finding}.

We begin by learning a fully amortized DAD policy to perform $T=10$ experiment steps to locate a single source.
We chose a training budget of 50K gradient steps this policy, as we found that further training did not significantly improve performance with our chosen architecture.

\begin{table}[t]
\begin{minipage}{0.45\textwidth}
    \vspace{0pt}
    \centering
        \centering
        \caption{\textbf{Source Location Finding.} Upper and lower bound estimates of total EIG. We report $\tau=6$, rest in Table~\ref{tab:stepdad_all_tau} in the Appendix.
        Errors show $\pm$1s.e., computed over 16 (2048) histories for step methods~(rest). DAD was trained for 50K steps, Step-DAD for 2.5K. 
        \vspace{-5pt}
        }
        \resizebox{\textwidth}{!}{
        {\small
        \begin{tabular}{lrr}
        \toprule
         Method & Lower bound ($\uparrow$) & Upper bound ($\downarrow$) \\
        \midrule
             Random & 3.612 $\pm$ 0.012 &  3.613 $\pm$ 0.012    \\
             Static  & 3.945 $\pm$ 0.026 & 3.946 $\pm$ 0.026	   \\
             Step-Static ($\tau =6$) &  3.974 $\pm$ 0.008 & 3.975 $\pm$ 0.008 \\
             DAD & 	7.040 $\pm$ 0.012 & 7.089 $\pm$ 0.013  \\
            \textbf{Step-DAD} ($\tau=6$)  &  \textbf{7.759  $\pm$ 0.114} &  \textbf{7.765 $\pm$ 0.114} \\
            \bottomrule
        \end{tabular}
        }
        \label{tab:locfin_table}}
\end{minipage}\hfill
 \end{table}
  \begin{figure}[t]
\begin{minipage}{0.48\textwidth}
    \vspace{0pt}
    \includegraphics[width=0.9\linewidth]{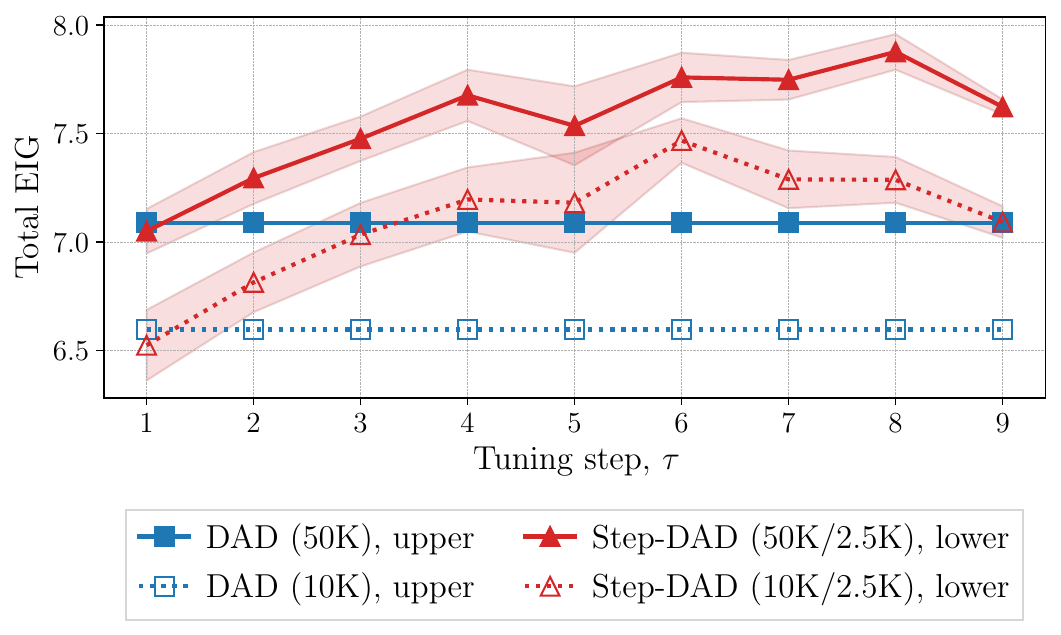}
    \captionof{figure}{\textbf{Sensitivity to training budget} for location finding experiment. 
    DAD policies are trained for 50K or 10K steps, Step-DAD policies are refined for 2.5K. 
    Errors show~$\pm$1s.e.  
    \vspace{-15pt}
    }
    \label{fig:locfin_50K_vs_10K}
\end{minipage}
\vspace{-12pt}
 \end{figure}
\textbf{Single policy update~~} We systematically evaluate our Step-DAD approach by exploring all possible fine-tuning steps, $\tau=1,\dots,9$. We use importance sampling for posterior inference and fine-tune the policy for 2.5K steps. 
Results for $\tau=6$ are presented in Table~\ref{tab:locfin_table}, whilst Table~\ref{tab:stepdad_all_tau} in the Appendix shows performance for all values of $\tau$.

\textbf{Sensitivity to training budget~~}   
We investigate the overall resource efficiency of Step-DAD by comparing to DAD under two training budgets. 
The \emph{full} budget is as before at 50K gradient steps, whilst the \emph{reduced} budget is limited to 10K steps, both then followed by 2.5K finetuning steps for the Step-DAD networks.
Figure \ref{fig:locfin_50K_vs_10K} presents a conservative comparison, showing upper bound estimates for DAD and lower bound estimates for Step-DAD. 
The results reveal that Step-DAD consistently outperforms its respective DAD baseline for all $\tau>1$ at both budget levels (the apparent slight drop for $\tau$ is likely due to the conservative estimation scheme used). 
Interestingly, Step-DAD with the reduced budget matches or exceeds the DAD with the full budget for all $\tau>3$, thereby achieving better results with nearly 5 times fewer total training steps.

We note that the performance advantage of Step-DAD over DAD appears to be most pronounced when fine-tuning occurs just past the midpoint of the experiment, that is for $\tau=6, 7$ or $8$.
At this stage, our method can effectively leverage the accumulated data to refine the policy, while ensuring there are enough experiment steps remaining to benefit from the improved, customized policy.

\textbf{Multiple sources~~}
We next consider a more complex setting of locating $2, 4$ and $6$ sources, which correspond to a $4$-, $8$- and $12$-dimensional unknown parameter, respectively.
Table~\ref{tab:locfin_multiple_sources} shows the results, indicating a consistent positive EIG difference for Step-DAD over DAD.

\begin{table}[t]
\begin{minipage}{0.45\textwidth}
    \centering
    \caption{\textbf{Location finding multiple sources}. Reported for $\tau =7$, Step-DAD (10K, 2.5K). Errors show $\pm$1s.e.   %
    \vspace{-5pt}
    }
{\small    \begin{tabular}{ccc}
    \toprule
      \hspace{-4pt} $\theta$ dim & EIG difference & DAD, total EIG (upper) \hspace{-10pt} \\
    \midrule
         4 & 0.701 $\pm$ 0.023  & 6.483 $\pm$ 0.055\\
         8 & 0.426 $\pm$ 0.014 & 7.111 $\pm$ 0.067\\
         12 & 0.423 $\pm$ 0.012 & 6.956 $\pm$ 0.056\\
        \bottomrule
    \end{tabular}}
    \label{tab:locfin_multiple_sources}
%
\end{minipage}
\end{table}


\vspace{-4pt}
\subsection{Robustness to prior perturbations} 
In BED, selecting an appropriate prior is critical as it both influences our final posterior, and the data we gather in the first place~\citep{simchowitz2021bayesian, go2022robust}.
Fully amortized PB-BED approaches can be particularly prone to pathologies from imperfect prior choices, as the prior dictates the generated data the policy is trained on.
Namely, if the prior predictive poorly matches the true data generating process (DGP), we may observe data at deployment that is highly distinct to anything seen in the policy training.
\begin{figure}[t]
    \centering
    \begin{minipage}{0.47\textwidth}
        \centering
        \includegraphics[width=0.9\linewidth]{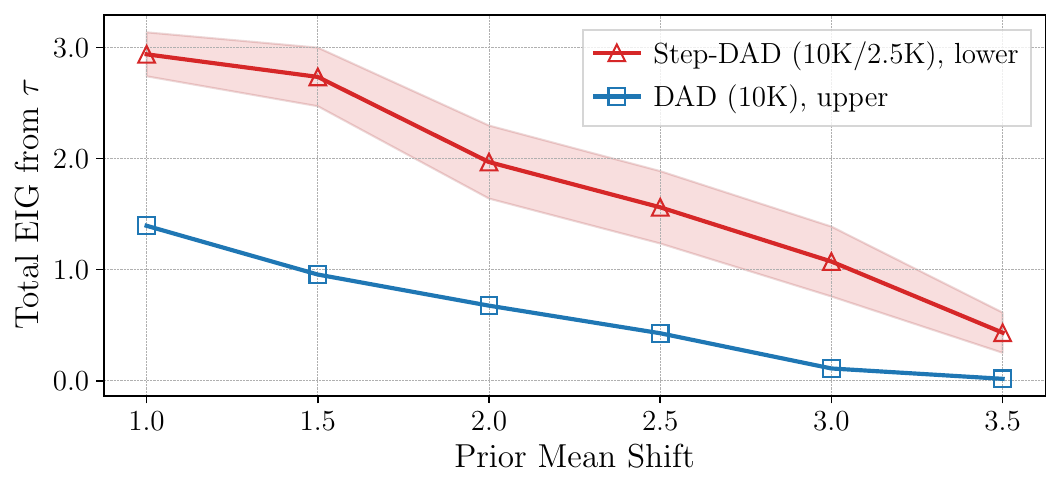}
        \vspace{-11pt}
        \caption{\textbf{Sensitivity to prior perturbations} in the  location finding experiment. %
        Total EIG %
        for Step-DAD remains more robust compared to DAD, which drops to zero.}
        \label{fig:shift-totals-misspec}
    \end{minipage}\hfill
    \vspace{-15pt}
\end{figure}
To evaluate if Step-DAD can improve robustness to prior imperfections, we now consider a case where the prior, $p(\theta)$,  used for the offline policy training leads to a DGP that is significantly different to the true DGP observed at deployment.
To this end, we introduce a test-time prior $\tilde{p}(\theta)$, and evaluate the policy performance under the DGP $\tilde{p}(y_{1:T}|\xi_{1:T})=\E_{\tilde{p}(\theta)}[\prod_{t=1}^T p(y_t|\theta,\xi_t,h_{t-1})]$, using the EIG under this \emph{alternative model} as an evaluation~metric
\vspace{-5pt}
\begin{equation}
    \gI_{\tilde{p}(\theta)}(\pi) \coloneqq \E_{\tilde{p}(\theta) p(h_T | \theta, \pi)}\left[ 
        \log \histlik[\policy] - \log\tilde{p}(h_T|\pi)
    \right], \nonumber
\vspace{-2pt}
\end{equation}
where $\tilde{p}(h_T|\pi) = \E_{\tilde{p}(\theta)}[p(h_T|\theta,\pi)]$.

The results for the source location finding design problem are shown in Figure~\ref{fig:shift-totals-misspec}. 
It reveals that Step-DAD consistently outperforms the DAD baseline across all degrees of prior shift we consider, with the EIG for DAD decreasing to essentially zero with the increased prior shift, whilst Step-DAD is able to deliver positive information gains.
This robustness is anticipated due to Step-DAD's ability to assimilate data gathered and adjust policies in light of new evidence.

\subsection{Test-Time Compute Ablations}

\begin{figure}[t]
    \centering
    \begin{minipage}{0.47\textwidth}
        \centering
        \includegraphics[width=0.9\linewidth]{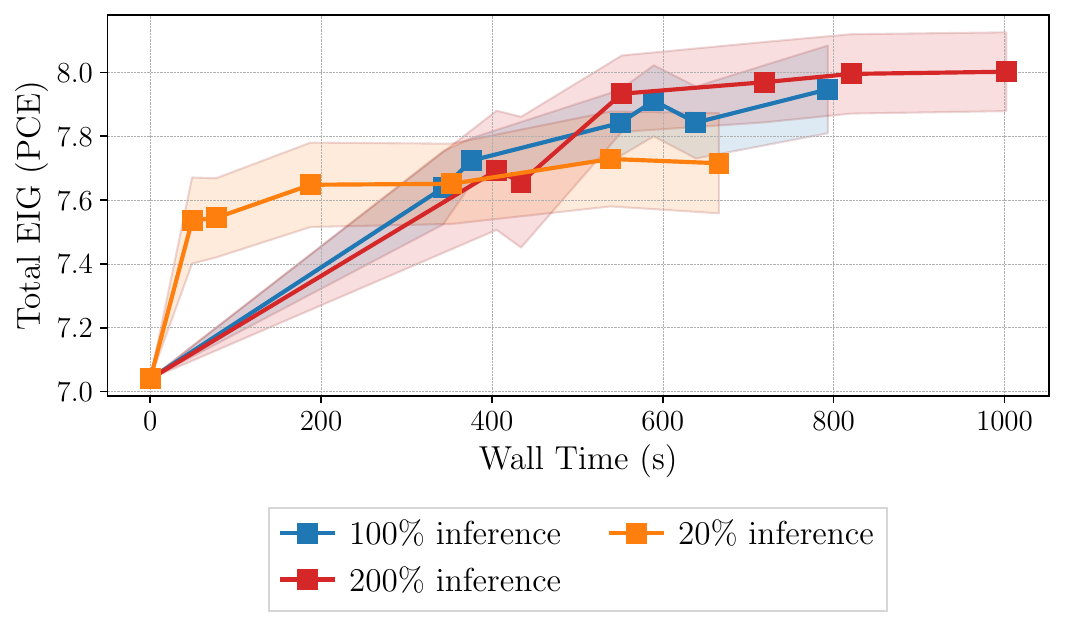}
        \vspace{5pt}
        \caption{\textbf{EIG as a function of wall time} in the  location finding experiment (T=10, $\tau =6$), varying both the inference budget and the number of fine-tuning steps (between 250 and 10000 steps). %
        Here 100\% inference budget corresponds to taking 20000 importance samples.
         }
        \label{fig:EIG-wall-time}
    \end{minipage}\hfill
    \vspace{-5pt}
\end{figure}

\begin{figure}[t]
    \vspace{0pt}
    \centering
    \includegraphics[width=0.9\linewidth]{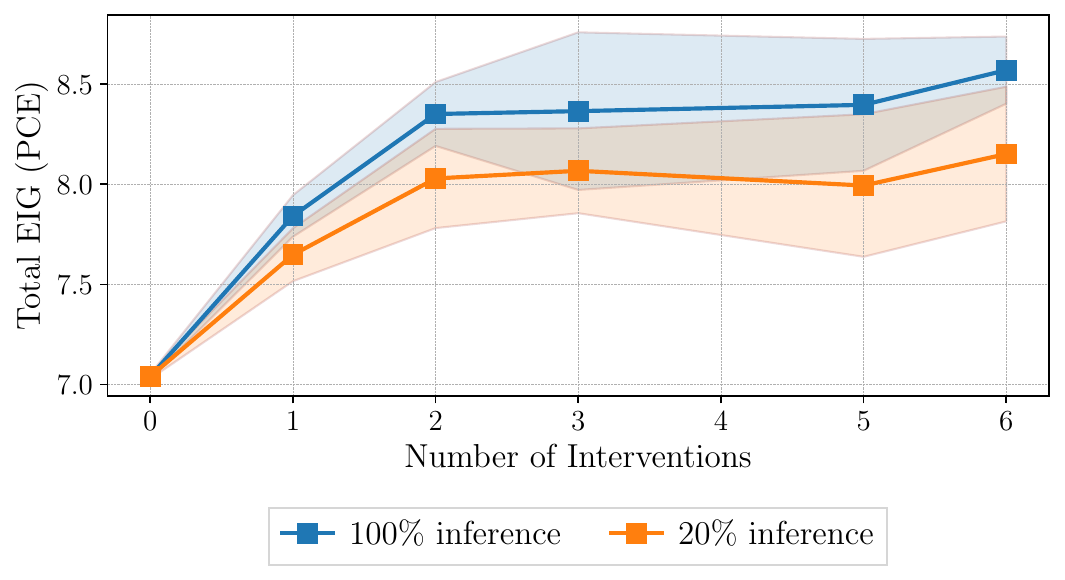}
    \vspace{7pt}
    \captionof{figure}{\textbf{Sensitivity to number of interventions} for location finding experiment (T=10) for different inference budgets (100\% = 20000 importance samples). EIG increases with more interventions, then plateaus. Each update corresponds to posterior inference + 2.5k fine-tuning steps.}
    \label{fig:eig-num-interventions}
    \vspace{-15pt}
\end{figure}

We now perform ablations to better understand how test-time performance 
is impacted by restrictions on computational budget.
Firstly, for the location finding experiment we consider three different per-step budgets for the inference and then vary the amount of fine-tuning steps performed for updating the StepDAD network.
As shown in Figure~\ref{fig:EIG-wall-time}, total EIG
generally improves with higher computational budgets, 
with diminishing returns for large budgets.
The performance for the 100\% and 200\% inference budgets are quite similar, indicating our inference has been successful, but a performance drop is seen when only using 20\% of our previous inference budget.
Importantly though, significant gains relative to DAD (corresponding to $0s$ wall time in the plot) are still achieved with small budgets corresponding to around a minute of wall time.

Secondly, we evaluate how performance evolves as a function of the number of interventions. Each intervention consists of updating the posterior and applying 2.5k fine-tuning steps. Figure~\ref{fig:eig-num-interventions} shows that Total EIG consistently increases with more interventions across all inference budgets, before eventually plateauing. We note that even for a small number of interventions, there is increased performance over DAD (corresponding to 0 interventions in the plot), demonstrating the utility of Step-DAD even in settings of constrained test-time compute budget.

\subsection{Hyperbolic Temporal Discounting}
Temporal discounting describes the tendency for individuals to prefer smaller immediate rewards over larger delayed ones. 
This phenomenon is a key concept in psychology and economics and has been used to study important social and individual behaviors~\citep{critchfield2001temporal}, including dietary choices~\citep{bickel2021temporal}, exercise habits~\citep{tate2015temporal}, patterns of substance abuse and addictive behaviours~\citep{story2014does}.
An individual's time delay preference is typically measured by asking them a series of questions, such as  ``Would you prefer \$$R$ now or \$100 in $D$ days time?''
Here the tuple $\xi = (R, D)$ defines our experimental design, and the experiment outcome $y$ is the participant's decision to either \emph{accept} or \emph{reject} the delay.

\textbf{Single update~~} Using the hyperbolic discounting model introduced in~\citet{mazur1987adjusting} and as implemented by~\citet{vincent2016hierarchical}, we train a DAD policy for 100K gradient steps, aimed at designing $T=20$ experiments. 
We select a grid of tuning steps $\tau$ in the range from 2 to 18 in increments of 2. 
For posterior inference, we use simple importance sampling to draw samples from  the posterior $p(\theta \halfmid h_\tau)$ and $1\%$ of the original training budget (i.e.~1K gradient steps).
\begin{figure}[t]
    \begin{minipage}{0.47\textwidth}
        \centering
        \includegraphics[width=0.9\linewidth]{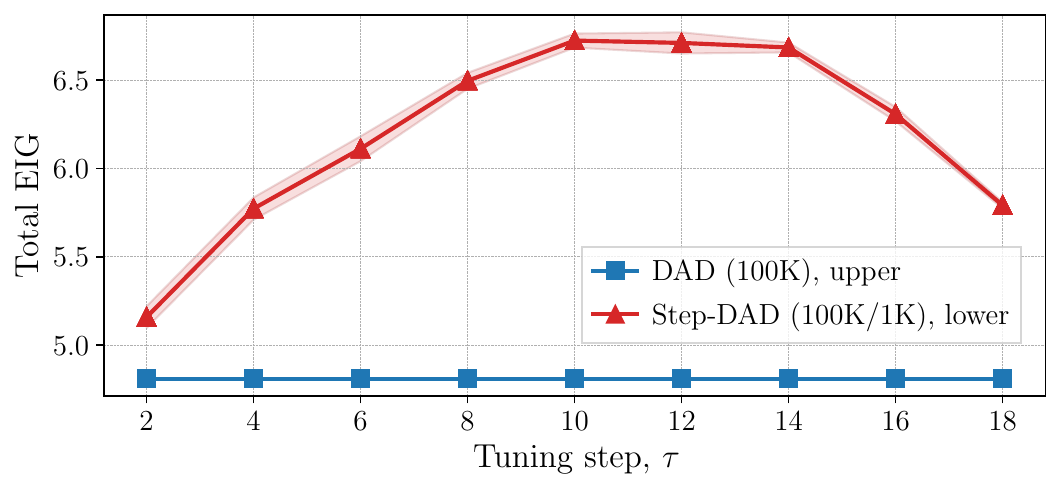}
        \vspace{-11pt}
        \caption{\textbf{Hyperbolic temporal discounting.} EIG improvement of Step-DAD over DAD after fine-tuning the policy at step $\tau$. The fully amortized DAD policy is trained for 100K steps, step-policy is refined for 1K steps.}
        \label{fig:htd_eig}
    \end{minipage}
\vspace{-5pt}
\end{figure}
\begin{table}[t]
\begin{minipage}[t]{0.48\textwidth}
\captionof{table}{\textbf{Hyperbolic temporal discounting.} Estimates of total EIG, $\gI_{1\rightarrow 20}(\pi)$. Errors indicate $\pm$ 1 s.e., ran over 16 (2048) histories for the step methods (rest). 
    Baselines as reported in~\citet{foster2021dad}, except DAD and Random. }\vspace{-3pt}
    \resizebox{\textwidth}{!}{
{\small
    \begin{tabular}{lrr}
    \toprule
    Method  & Lower bound ($\uparrow$)& Upper bound ($\downarrow$) \\
    \midrule
        Random & 2.249 $\pm$ 0.010 & 2.249 $\pm$ 0.010 \\
        \citet{kirby2009one} & 1.861 $\pm$ 0.008 & 1.864 $\pm$ 0.009 \\
        Static   &  2.518 $\pm$ 0.007	& 2.524 $\pm$ 0.007    \\
        \citet{frye2016measuring} &  3.500 $\pm$ 0.029 & 3.513 $\pm$ 0.029\\
        Greedy (BADapted) &4.454 $\pm$ 0.016 & 4.536 $\pm$ 0.018 \\
        DAD & 4.778 $\pm$ 0.013  &  4.808 $\pm$ 0.014 \\
        \textbf{Step-DAD} ($\tau$=10) &  \textbf{6.711 $\pm$ 0.040} &  \textbf{6.721 $\pm$ 0.040} \\
        \bottomrule
    \end{tabular}
    }
    \label{tab:htd_table}}
\end{minipage}\hfill
\vspace{-8pt}
\end{table}

\looseness=-1
Figure~\ref{fig:htd_eig} reports the results and illustrates that Step-DAD yields an improvement in total EIG for \emph{all} choices of $\tau$ when compared to the baseline DAD policy. 
The largest increase occurs around, and shortly after, the middle of the experiment, aligning with our previous intuition: here sufficient data has been accumulated to inform a meaningful posterior update, whilst sufficient number of experiments remain to effectively deploy the refined policy.
Table~\ref{tab:htd_table} demonstrates the superiority of Step-DAD over conventional baselines, including those derived from psychology research~\citep{kirby2009one,frye2016measuring,vincent2017darc} and traditional BED approaches such as the BADapted~\citep{vincent2017darc} approach which was specifically designed for this problem. It also outperforms the static BED strategy, highlighting the effectiveness of adaptive design strategies in extracting more valuable information from experiments.

\begin{table}[b]
\begin{minipage}[t]{0.48\textwidth}
\vspace{-10pt}
\caption{\textbf{Hyperbolic temporal discounting: extrapolating designs.} Comparison of EIG upper bound for DAD and lower bound for Step-DAD across tuning steps $\tau$ and $T=40$. Errors indicate $\pm1$s.e., computed over $16$ histories. \vspace{-5pt}}
\label{tab:htd_2step_comparison}
\resizebox{\textwidth}{!}{
{\small
\begin{tabular}{cccccc}
\toprule
 & \multicolumn{2}{c}{EIG from $\tau$ ($\uparrow$)} & \multicolumn{2}{c}{EIG from $2\tau$ ($\uparrow$)} \\
\cmidrule(lr){2-3} \cmidrule(lr){4-5}
 $\tau$ & DAD & Step-DAD & DAD & Step-DAD  \\
\midrule
5 & $3.9 \pm 0.24$ & $\mathbf{4.8 \pm 0.14}$ & $2.1 \pm 0.36$ & $\mathbf{4.6 \pm 0.18}$ \\
6 & $3.4 \pm 0.30$ & $\mathbf{5.1 \pm 0.07}$ & $1.7 \pm 0.30$ & $\mathbf{4.7 \pm 0.12}$ \\
7 & $3.0 \pm 0.35$ & $\mathbf{4.4 \pm 0.30}$ & $1.3 \pm 0.23$ & $\mathbf{4.4 \pm 0.11}$ \\
8 & $2.6 \pm 0.36$ & $\mathbf{4.7 \pm 0.13}$ & $1.0 \pm 0.19$ & $\mathbf{4.2 \pm 0.13}$ \\
\bottomrule
\end{tabular}
}}
\end{minipage}
\vspace{-10pt}
\end{table}
\textbf{Multiple updates and design extrapolation~~} We extend the deployment of DAD and Step-DAD policies for this problem to $T=40$ experiment steps, doubling the scope at which they were originally trained, i.e. without retraining the DAD network. Step-DAD is fine-tuned at two steps, $\tau$ and $2\tau$, with $\tau \in \{5,6,7,8\}$. 
As Table~\ref{tab:htd_2step_comparison} shows, Step-DAD demonstrates significantly improved capacity to extract information in later stages, beyond its initial training. This highlights the robustness and flexibility of our method in extending experimental horizons compared to DAD.
\subsection{Constant Elasticity of Substitution (CES)}

We conclude our evaluation with the Constant Elasticity of Substitution (CES) model, a framework from behavioral economics to analyse the relative utility of two baskets of goods \citep{arrow1961capital}. 
This model emulates how economic actors specify their relative preference $y$ between these baskets on a sliding scale.
We follow the experimental setup of \citet{foster2019variational}, with full details given in Appendix \ref{app:ces_deets}. The CES model faces challenges due to $y$ being sampled from a censored normal distribution, concentrating probability at observation boundaries and creating local maxima \citep{blau2022optimizing, foster2019variational}. 
\begin{table}[t]
\centering
\caption{\textbf{Constant Elasticity of Substitution.} Estimates on total EIG, $\gI_{1\rightarrow 10}(\pi)$. DAD and Static trained for 50K steps; step variants finetuned for 10K steps. Errors denote $\pm$ 1 s.e.
\vspace{-15pt}
}
\small{
\begin{tabular}{@{}lcc@{}}
\toprule
Method & Lower bound ($\uparrow$)  & Upper bound ($\downarrow$) \\ \midrule
Random          & $2.487 \pm 0.007$    & $2.487 \pm 0.007$   \\
Greedy (vPCE)            & $13.333 \pm 0.975$    & $13.343 \pm 0.975$   \\
Static              & $9.279 \pm 0.020$   & $11.183 \pm 	0.0453$   \\
Step-Static ($\tau =5$) & $13.010 \pm 0.185$   & $13.682 \pm 0.189$   \\
DAD             & $10.181 \pm 0.021$   & $11.478 \pm 0.042$   \\
\textbf{Step-DAD ($\tau =5$)}         & \textbf{13.879 $\pm$ 0.352} & \textbf{14.623 $\pm$ 0.363}  \\ \bottomrule
\end{tabular}
}
\label{tab:CES_results}
\vspace{-13pt}
\end{table}

Table \ref{tab:CES_results} shows that Step-DAD outperforms all baselines.
Step-Static also achieves on-par results with Greedy (vPCE), highlighting the benefits of semi-amortized design strategies in this model. 
We note the performance of DAD is worse than Step-Static. 
This can be attributed to the discontinuities in the censored likelihood (Eq.\!~\eqref{eq:censored_likelihood} in the Appendix) that complicate the training of the policy, often resulting in convergence to suboptimal local maxima.
\section{Conclusions}

\looseness=-1
In this work, we introduced the idea of a semi-amortized approach to PB-BED that enhances the flexibility, robustness and effectiveness of fully amortized design policies.
Our method, Stepwise Deep Adaptive Design (Step-DAD), dynamically updates its step policy in response to new data through a systematic `infer-refine' procedure that refines the design policy for the remaining experiments in light of the experimental data gathered so far.
This iterative refinement enables the step policy to evolve as the experiment progresses, ensuring more robust and tailored design decisions, as demonstrated in our empirical evaluation.
Step-DAD thus improves our ability to conduct more efficient, informed, and robust experiments, opening new avenues for exploration in various scientific domains. 

\section*{Impact Statement}
This paper presents work whose goal is to advance the field of Machine Learning. There are many potential societal consequences of our work, none which we feel must be specifically highlighted here.

\section*{Acknowledgements}
MH is supported by funding provided by Novo Nordisk and by the EPSRC Centre for Doctoral
Training in Modern Statistics and Statistical Machine Learning.
TR is supported by
the UK EPSRC grant EP/Y037200/1.


\bibliographystyle{plainnat} 
\bibliography{refs}

\newpage
\appendix
\onecolumn

\section{Algorithm}\label{app:algorithm}
\begin{algorithm}[H]
\SetAlgoNoLine

\SetKwInput{Input}{Input}
\SetKwInput{Output}{Output}

\Input{ 
Generative model $p(\theta)p(y\halfmid \theta, \xi)$,
experimental budget $T$,
refinement schedule $\gT {=} \{\tau_0, \tau_1, \dots, \tau_{K+1}$\}, 
with $\tau_0{=}0, \tau_{K+1}{=}T$,
training budgets $\{N_{\tau_k}\}_{k=1:K}$

}

\Output{Dataset $h_T=\{(\xi_t, y_t)\}_{t=1:T}$}

\vspace{2pt}

\textsc{Offline stage: Before the live experiment }

\Indp

$\triangleright$ Set $h_0 = \varnothing$.

\While{\textnormal{Computational budget does not exceed $N_0$}} {
    $\triangleright$ Train fully-amortized $\pi_{0}$ as in~\citet{foster2021dad}
    
}

\Indm

\textsc{Online stage: During the live experiment}

\Indp

\For{
    $k=1,\dots,K+1$
}{
    \For{
        $\tau_{k-1} < t \leq \tau_k$
    }{ 
        $\triangleright$ Compute design $\xi_t = \pi_{{\tau_{k-1}}}(h_{t-1})$  

        $\triangleright$ Run experiment $\xi_t$,  observe an outcome $y_t$

        $\triangleright$ Update the dataset $h_t = h_{t-1} \cup (\xi_t, y_t)$
    }

    \textbf{If} $k=K+1$ \textbf{then} \Return {$h_T$}  \textbf{end}

    \While{\textnormal{Computational budget does not exceed $N_k$}} {
    
        $\triangleright$ Fit a posterior $p(\theta \halfmid h_{\tau_k})$ 
    
        $\triangleright$ Fine-tune policy $\pi_{{\tau_k}}$ by optimizing~\eqref{eq:stepdad_objective} 
    }

}

\Indm

\caption{Overview of Step-DAD }
\label{algo:method}
\end{algorithm}

\section{Further EIG bounds}\label{app:implicit_models}

The sequential Nested Monte Carlo (sNMC)~\citep{foster2021dad} upper bound is given by
\begin{equation}
\small
    \mathcal{U}^{h_{\tau_k}}_{\tau_k+1 \rightarrow T}(\policy) := \E
    \left[ 
        \log \frac{
            p(h_{\tau_k+1:T} \halfmid h_{\tau_k} \theta_0,\pi)
        }{
            \frac{1}{L}\sum_{\ell=1}^L p(h_{\tau_k+1:T} \mid \theta_\ell,\pi)
        }
    \right], \label{eq:sNMC}
\end{equation}
which we use to evaluate different design strategies.

For implicit models we can utilize the InfoNCE bound~\citep{oord2018representation}, which is given by 
\begin{equation}
    \mathcal{L}_\text{InfoNCE}( \pi, U) := \mathbb{E}_{p(\theta_0) p(h_T|\theta_0, \pi)} \mathbb{E}_{p(\theta_{1:L})} \left[ \log\frac{{ \exp(U(h_T, \theta_0))}}{{ \frac{1}{L+1}\sum_{i=0}^{L} \exp(U(h_T, \theta_i))}} \right],
\end{equation}
or the NWJ bound~\citep{nguyen2010nwj}, given by:
\begin{equation}
    \mathcal{L}_\text{NWJ}( \pi, U) := \mathbb{E}_{p(\theta)p(h_T|\theta, \pi)} \left[ U(h_T, \theta) - e^{-1} \mathbb{E}_{p(\theta)p(h_T|\pi)} \left[ \exp(U(h_T, \theta)) \right] \right],
\end{equation}
where in both bounds $U$ is a learnt critic function, $U: h_T \times \theta \mapsto \R$.
\section{Experiment details}\label{app:experiments}


\subsection{Computational resources}\label{app:Computational resources}
The experiments were conducted using Python and open-source tools. PyTorch \citep{paszke2019pytorch} and Pyro \citep{pyro} were employed to implement all estimators and models. Additionally, MlFlow \citep{zaharia2018accelerating} was utilized for experiment tracking and management.
Experiments were performed on two separate GPU servers, one with 4xGeForce RTX 3090 cards and 40 cpu cores; the other one with 10xA40 and 52 cpu cores.
Every experiment was ran on a single GPU.

\subsection{Policy architecture}
\label{app:policy architecture}
In the same vein as \citet{foster2021dad}, we leverage the permutation invariance of EIG in our BED problem settings to allow for more efficient network training and \textit{weight sharing}. That is to say we represent histories of varying lengths by a single fixed dimensional representations  $R(h_t)$ parameterized by an \textit{encoder network} $E_{\phi_1}$ (inheriting notation from \citet{foster2021dad}),
\begin{equation}
    R(h_t) := \sum^{t}_{k=1} \; E_{\phi_1} (\xi_k, y_k) .
\end{equation}
Our policy then becomes: $\pi_{\phi}(h_t) = F_{\phi_2} (R(h_t))$ where $F_{\phi_2}$ is a decoder network.
The specific architectures for the encoder and decoder networks are given in Sections \ref{app:location_finding} and \ref{app:htd_deets}.

\subsection{Evaluation details}\label{app:experiments_evals}

When evaluating fully amortized policies, we employ the sPCE~\eqref{eq:stepdad_objective} lower bound and sNMC~\eqref{eq:sNMC} upper bound using a large number of contrastive samples, $L=100$K, drawn from the prior to approximate the inner expectation. 
The outer expectation is approximated using $N=2048$ draws from the model $\prior\histlik[\policy]$.
To approximate the total EIG quantity for Step-DAD efficiently, we use
\begin{align}
    \label{eq:EIG difference}
   & \Delta \gI(\pi^{s}, \pi_{0}) \coloneqq \gI_{1\rightarrow T}(\policy^{s}) - \gI_{1\rightarrow T}(\pi_{0}) \\
     & \quad\quad = \E_{p(h_\tau|\pi_0)} [ 
        \gI^{h_\tau}_{\tau + 1 \rightarrow T}(\policy^{s}) - \gI^{h_\tau}_{\tau + 1\rightarrow T}(\pi_{0})
     ] \\
     & \quad\quad \ge  
        \E_{p(h_\tau|\pi_0)} \left[\gL^{h_\tau}_{\tau + 1 \rightarrow T}(\policy^{s}) - \gU^{h_\tau}_{\tau + 1\rightarrow T}(\pi_{0}) \right],
    \label{eq:eig_diff_rb}
\end{align}
and add that difference to our corresponding lower/upper bound estimates of $\gI_{1\rightarrow\tau}(\policy_0)$ (which can be directly estimated using the sPCE and sNMC bounds respectively).
Here we are using a combination of an upper and lower bound is done to ensure any reported gains from StepDAD are conservatively underestimated, as well as reducing the variance in our estimates. 

Note, as found in \citet{blau2022optimizing}, the lower and upper bounds are less tight compared to other experiments. In this experiment, total EIG was simply calculated with the higher variance alternative as follows: $EIG_{1\rightarrow\tau}(\pi_{0})+$ $\E_{p(h_\tau|\pi_0)} \left[EIG_{\tau\rightarrow T}(\policy^{s})\right]$.

\subsection{Baselines}

\textbf{Static}
The Static (fixed) baseline pre-selects a fixed $\xi_1, ..., \xi_T$ ahead of the experiment before any observations. As in all cases, designs are optimized to maximise the EIG. This non-adaptive approach used PCE bound to optimize the design set $\xi_1, ..., \xi_T$ and can be thought of treating the entire sequence of experiments as a single experiment \citep{foster2021dad}.

\textbf{Step-Static}
Step-Static computes a set of designs for $\xi_1, ..., \xi_\tau$ before a posterior update and subsequent computation of designs $\xi_\tau, ..., \xi_T$. Each set of designs are selected following the static methodology outlined above.

\textbf{Random}
As the name implies, this baseline selects a random sample of designs $\xi_1, ..., \xi_T$. Thus the most non-informed naive approach.

\subsection{Location Finding}\label{app:location_finding}

The objective of the experiment is to ascertain the location, $\theta$, of $K$ sources. $K$ is presumed to be predetermined. The intensity at each selected design choice, $\xi$, represents a noisy observation $\log y \mid \theta, \xi$ centered around the logarithm of the underlying model, $\mu(\theta, \xi)$
\begin{equation}
    \mu(\theta, \xi) = b + \sum_{k=1}^{K} \frac{\alpha_k}{\left( m +  \lvert\lvert \theta_k - \xi \rvert\rvert \right)^2}. 
\end{equation}
In the given context, $\alpha_k$ may be either predetermined constants or random variables, $b > 0$ represents a fixed background signal, and $m$ is a constant representing maximum signal
\begin{equation}
    \log [y \mid \theta, \xi] \sim \mathcal{N}(\log \mu(\theta, \xi), \sigma^2).    
\end{equation}
We assumed a normal standard prior at training: $\theta_k  \overset{\text{i.i.d.}}{\sim} \mathcal{N}(0_d, I_d)$.

\subsubsection{Training details}
\label{app:training details loc-find}

Tables \ref{tab:encoder_loc_find_architecture} and \ref{tab:decoder_loc_find_architecture} outline the architecture of the DAD policy network.
The model hyperparameters used are outlined in Tables \ref{tabapp:parameters}, \ref{tab:parameter_comparison_pretrain} and \ref{tab:parameter_comparison_stepdad}.

\begin{table}
    \centering
    \captionof{table}{\textbf{Source location finding.} Encoder network $E_{\phi_1}$, architecture as in \citet{foster2021dad}}
    \begin{tabular}{ c  c  c  c }
        \toprule
        Layer & Overview & Dimension & Activation \\
        \midrule
        Design-outcome & $\xi, y$ & 3 & - \\
        H1 & Fully connected & 64 & RELU \\
        H2 & Fully connected & 256 & RELU \\
        Output & Fully Connected & 16 & - \\
        \bottomrule
    \end{tabular}
    \label{tab:encoder_loc_find_architecture}
\end{table}
\begin{table}
    \centering
    \captionof{table}{\textbf{Source location finding.} Decoder network $F_{\phi_2}$, architecture as in \citet{foster2021dad}}
    \begin{tabular}{ c  c  c  c }
        \toprule
        Layer & Overview & Dimension & Activation \\
        \midrule
        Input & $E(h_t)$ & 16 & - \\
        H1 & Fully connected & 128 & RELU \\
        H1 & Fully connected & 16 & RELU \\
        Output & $\xi$ & 2 & - \\
        \bottomrule
    \end{tabular}
    \label{tab:decoder_loc_find_architecture}
\end{table}

\begin{table}
    \centering
    \caption{\textbf{Source location finding.} Parameter Values}
    \begin{tabular}{l l}
        \toprule
        Parameter & Value \\
        \midrule
        $\alpha_k$ & 1 for all $k$ \\
        Max signal, $m$ & $10^{-4}$ \\
        Base signal, $b$ & $10^{-1}$ \\
        Observation noise scale, $\sigma$ & 0.5 \\
        \bottomrule
    \end{tabular}
    
    \label{tabapp:parameters}
\end{table}

\begin{table}
    \centering
    \caption{\textbf{Source location finding.} Parameters for training pre-training DAD/Step-DAD}
    \begin{tabular}{lc}
        \toprule
        Parameter & Value  \\
        \midrule
        Batch size & 1024  \\
        Number of negative samples & 1023 \\
        Number of gradient steps (default) & 50K \\
        Learning rate (LR) & 0.0001  \\
        \bottomrule
    \end{tabular}
    \label{tab:parameter_comparison_pretrain}
\end{table}

\begin{table}
    \centering
    \caption{\textbf{Source location finding.} Parameters for Step-DAD finetuning}
    \begin{tabular}{lc}
        \toprule
        Parameter & Value  \\
        \midrule
        Number of theta rollouts & 16  \\
        Number of posterior samples & 20K \\
        Finetuning learning rate (LR) & 0.0001  \\
        \bottomrule
    \end{tabular}
    \label{tab:parameter_comparison_stepdad}
\end{table}


\subsubsection{Optimal $\tau$}
\label{app: Optimal tau}
\begin{figure}[t]
    \vspace{0pt}
    \centering
    \includegraphics[width=0.5\linewidth]{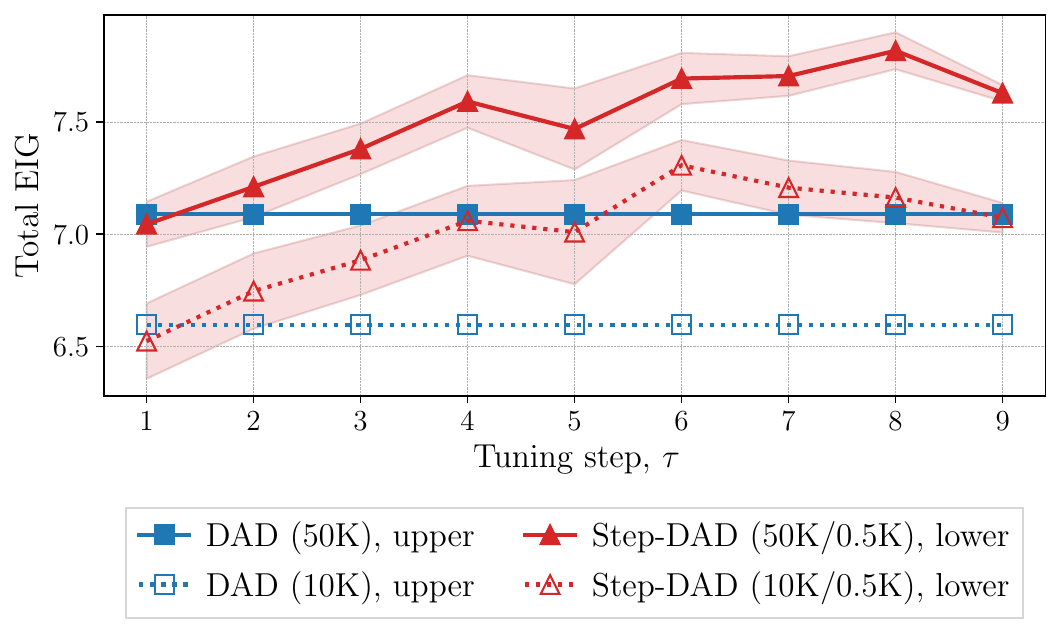}
    \captionof{figure}{\textbf{Sensitivity to training budget} for location finding experiment. 
    DAD policies are trained for 50K or 10K steps, Step-DAD policies are refined for 0.5K steps.
    Step-DAD outperforms its respective DAD baseline for all~$\tau$. Errors show~$\pm$1s.e.}
    \label{fig:locfin_50K_vs_10K_500}
    \vspace{-5pt}
\end{figure}

The optimal value for EIG occurs at a range around $\tau \in [6,7,8]$ (Table \ref{tab:stepdad_all_tau}). 
Figure \ref{fig:locfin_50K_vs_10K_500} demonstrates a further ablation from Section \ref{subsec:Source_location_finding} where we now finetune the Step-DAD policies for 0.5K steps instead of 2.5K. We observe the same behavior for this reduced finetuning as before. Additionally, we note Step-DAD under the \textit{reduced} 10K pretraining budget still outperforms DAD under the \textit{full} 50K budget for $\tau =6$.

\begin{table}[t]
    \centering
    \caption{\textbf{Source location finding.} Total EIG for Step-DAD for various tuning steps $\tau$. Base DAD network was pretrained for 50k steps before subsequent 2.5k finetuning steps at $\tau$.}
    \begin{tabular}{cccc}
        \hline
        \textbf{$\tau$} & Lower bound ($\uparrow$) & Upper bound ($\downarrow$) \\
        \toprule
        \textbf{1 (Worst)} & \textbf{7.050 ($\pm$ 0.103)} & \textbf{7.065 ($\pm$ 0.103)} \\
        
        2 & 7.295 ($\pm$ 0.119) & 7.305 ($\pm$ 0.120) \\
        
        3 & 7.476 ($\pm$ 0.102) & 7.482 ($\pm$ 0.103) \\
        
        4 & 7.676 ($\pm$ 0.118) & 7.680 ($\pm$ 0.118) \\
        
        5 & 7.536 ($\pm$ 0.182) & 7.540 ($\pm$ 0.182) \\
        
        6 & 7.759 ($\pm$ 0.114)& 7.765 $\pm$ 0.114) \\
        
        7 & 7.748 ($\pm$ 0.091) & 7.757 ($\pm$ 0.091) \\
        
        \textbf{8 (Best)} & \textbf{7.877 ($\pm$ 0.082)} & \textbf{7.892 ($\pm$ 0.082)} \\
        
        9 & 7.623 ($\pm$ 0.035) & 7.652 ($\pm$ 0.035) \\
        \bottomrule
        DAD & 7.040 ($\pm$ 0.012) & 7.089 ($\pm$ 0.013)  \\
        \bottomrule
    \end{tabular}
    \label{tab:stepdad_all_tau}
\end{table}



\vspace{-3pt}
\subsubsection{Multiple sources}
\label{app:Scaling up}
\vspace{-3pt}

As a further ablation, we test the robustness of a semi-amortized approach to the more complex task of location finding with multiple sources of signal. We find a positive EIG difference in all cases, once again demonstrating the benefits of using the semi-amortized Step-DAD network compared to the baseline fully amortized DAD network. Increasing the number of sources leads to a reduction in the EIG difference. However, this is expected given the increasing complexity of the task compared to the fixed number of steps post $\tau$ to adjust the decision making policy in the semi amortized setting. All experiments were run with $\tau = 7$. Refer to Table \ref{tab:locfin_multiple_sources} in main paper for results.



\vspace{-3pt}
\subsection{Hyperbolic Temporal Discounting Model}\label{app:htd_deets}
\vspace{-3pt}

Building on \citet{foster2021dad}, 
\citet{mazur1987adjusting} and \citet{vincent2016hierarchical}, we consider a hyperbolic temporal discounting model. A
participant’s behaviour is characterized by the latent variables $\theta = (k, \alpha)$ with prior distributions as follows:
\begin{equation}
    \log k \sim \mathcal{N}(-4.25, 1.5)
    ~~~~\alpha \sim \text{HalfNormal}(0, 2).
\end{equation}
HalfNormal distribution denotes a Normal distribution truncated at 0. For given $k$, $\alpha$, the value of the two propositions
“£R today” and “£100 in D days” with design $\xi = (R, D)$ are given by:
\begin{equation}
    V_0 = R, ~~~~\quad V_1 = \frac{100}{1 + kD}.
\end{equation}
Participants select $V_1$ in place of $V_0$ with probability modelled as: 
\begin{equation}
    p(y = 1 | k, \alpha, R, D) = \epsilon + (1 - 2\epsilon) \Phi\left(\frac{V_1 - V_0}{\alpha}\right).
\end{equation}
We fix $\epsilon= 0.01$ and $\phi$ is the c.d.f of the standard Normal Distribution
\begin{equation}
    \Phi(z) = \int_{-\infty}^{z} \frac{1}{\sqrt{2\pi}} \exp{-\frac{1}{2}z^2}. 
\end{equation}
As in \citet{foster2021dad}, the design parameters \(R, D\) have the constraints \(D > 0\) and \(0 < R < 100\).  \(R, D\) are represented in an unconstrained space \(\xi_d, \xi_r\) and transformed using the below maps
\begin{equation}
    D = \exp(\xi_d) \quad \quad R = 100 \cdot \text{sigmoid}(\xi_r).
\end{equation}
Tables \ref{tab:encoder_hyper_architecture} and \ref{tab:decoder_hyper_architecture} outline the architecture of the DAD policy network.
Tables \ref{tab:td_parameter_comparison_pretrain} and \ref{tab:td_parameter_comparison_stepdad} give the hyper-parameters for training the DAD/Step-DAD policies for the hyperbolic temporal discounting model. 

\begin{table}
    \centering
    \captionof{table}{\textbf{Hyperbolic Temporal Discounting model.} DAD encoder network.}
    \begin{tabular}{ c  c  c  c }
        \toprule
        Layer & Overview & Dimension & Activation \\
        \midrule
        Design input & $\xi_d, \xi_r$ & 2 & - \\
        H1 & Fully connected & 256 & Softplus \\
        H2 & Fully connected & 256 & Softplus \\
        H3 & Fully connected & 16 & - \\
        H3' & Fully connected & 16 & - \\
        Output & $y \odot \text{H3} + (1 - y) \odot \text{H3}'$ & 16 & - \\
        \bottomrule
    \end{tabular}
    \label{tab:encoder_hyper_architecture}

\end{table}

\begin{table}
    \centering
    \captionof{table}{\textbf{Hyperbolic Temporal Discounting model.} DAD decoder (emission) network}
    \begin{tabular}{ c  c  c  c }
        \toprule
        Layer & Overview & Dimension & Activation \\
        \midrule
        Input & $R(h_t)$ & 16 & - \\
        H1 & Fully connected & 256 & Softplus \\
        H2 & Fully connected & 256 & Softplus \\
        Output & $\xi_d, \xi_r$ & 2 & - \\
        \bottomrule
    \end{tabular}
    \label{tab:decoder_hyper_architecture}
\end{table}

\begin{table}
    \centering
    \caption{\textbf{Hyperbolic Temporal Discounting model.} Parameters for training of the DAD network.}
    \begin{tabular}{lc}
        \toprule
        Parameter & Value  \\
        \midrule
        Batch size & 1024  \\
        Number of negative samples & 1023 \\
        Number of gradient steps (default) & 100K \\
        Learning rate (LR) & $5\times10^{-5}$  \\
        Annealing frequency & 1K \\
        Annealing factor & 0.95 \\
        \bottomrule
    \end{tabular}
    \label{tab:td_parameter_comparison_pretrain}
\end{table}

\begin{table}
    \centering
    \caption{\textbf{Hyperbolic Temporal Discounting model.} Parameters for Step-DAD policy finetuning.}
    \begin{tabular}{lc}
        \toprule
        Parameter & Value  \\
        \midrule
        Number of theta rollouts & 16  \\
        Number of posterior samples & 20K \\
        Finetuning learning rate (LR) & $5\times10^{-5}$  \\
        \bottomrule
    \end{tabular}
    \label{tab:td_parameter_comparison_stepdad}
\end{table}


\vspace{-3pt}
\subsection{Constant Elasticity of Substitution (CES)}\label{app:ces_deets}
\vspace{-3pt}

This experiment builds upon a behavioral economic model in which participants compare the differing utility, $U(x)$, of two presented baskets of goods $x$ \citep{arrow1961capital}. In this experiment $x \in [0,100]^3$ represents non-negative quantities of three goods which together form the basket for evaluation. 

The agent compares the two baskets $x$ and $x'$ by evaluating their individual utility $U(x)$ and subsequently indicating their preference using a sliding scale ranging from $0$ to $1$ following the probabilistic model defined below. The latent variables governing this framework would in practice vary across individuals; representing their unique preferences. Thus the experimental design objective is to infer each of these latent parameters and therefore understand the individual preference model of the decision maker in question. The priors are defined as follows:

\begin{align}
\rho &\sim \text{Beta}(1, 1) \\
\alpha &\sim \text{Dirichlet}([1, 1, 1]) \\
\log u &\sim \mathcal{N}(1, 3) \\
\end{align}

The probabilistic model is expressed as:
\begin{align}
U(x) &= \left( \sum_i x_i^{\rho} \alpha_i \right)^{1/\rho} \\
\mu_\eta &= (U(x) - U(x')) u \\
\sigma_\eta &= (1 + \|x - x'\|) \tau \cdot u \\
\eta &\sim \mathcal{N}(\mu_\eta, \sigma_\eta^2) \\
y &= \text{clip}(\text{sigmoid}(\eta), \epsilon, 1 - \epsilon) \label{eq:censored_likelihood}
\end{align}

\begin{figure}[t]
    \vspace{0pt}
    \centering
    \includegraphics[width=0.5\linewidth]{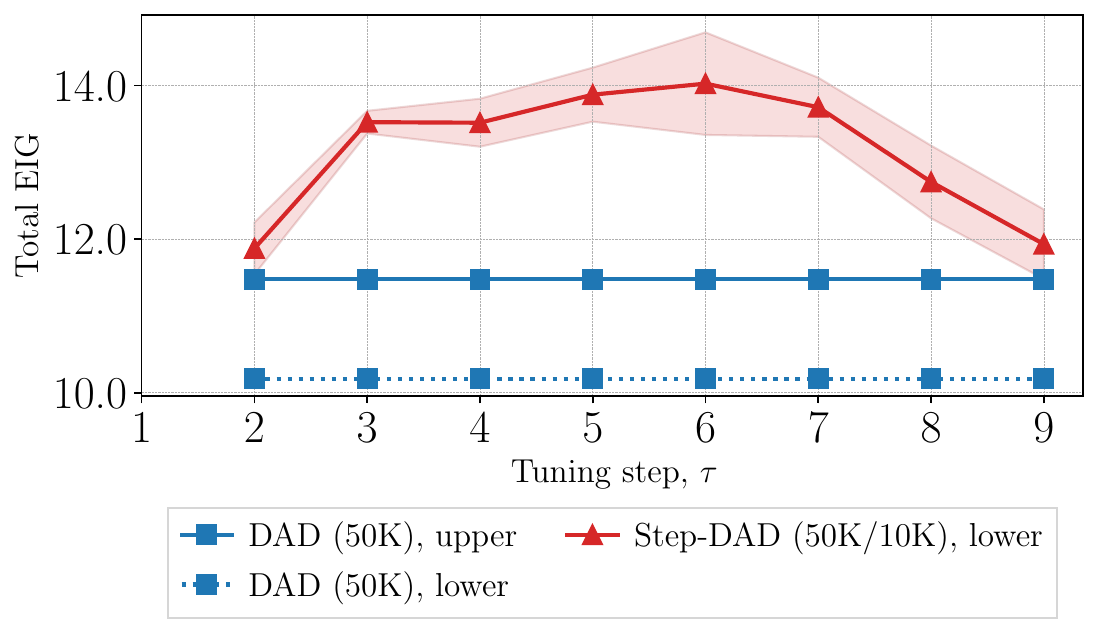}
    \captionof{figure}{\textbf{Constant Elasticity of Substitution.} Improvement in EIG by Step-DAD over DAD after fine-tuning. DAD is trained for 50K steps, with Step-DAD undergoing an additional 10K finetuning steps. Step-DAD (lower) outperforms the DAD baseline. DAD was initialized with designs from the Static methodology to overcome challenges introduced by the accumulated probability mass at the boundaries following censoring.}
    \label{fig:ces_stepDAD}
    \vspace{-5pt}
\end{figure}

The values of the hyper-parameters used in the model are detailed in Table~\ref{tab:CES_hyperparameters}.

\begin{table}
\centering
\caption{\textbf{CES model.} Hyper-parameter values.}
\label{tab:CES_hyperparameters}
\begin{tabular}{ll}
\toprule
\textbf{Parameter} & \textbf{Value} \\
\midrule
$\tau$ & 0.005 \\
$\epsilon$ & $2^{-22}$ \\
\bottomrule
\end{tabular}
\end{table}

\begin{table}
    \centering
    \captionof{table}{\textbf{CES model.} DAD embedding layers. Outcomes and Designs are embedded and then concatenated before being passed into the encoder.}
    \begin{tabular}{ c  c  c  c }
        \toprule
        Layer & Overview & Dimension & Activation \\
        \midrule
        Input & $\xi_d \text{ or } y$ & 6 or 1 & - \\
        Output & Fully connected (Layer Norm) & 32 & ReLU \\
        \bottomrule
    \end{tabular}
    \label{tab:embed_ces_architecture}

\end{table}

\begin{table}[t]
    \centering
    \captionof{table}{\textbf{CES model.} DAD encoder network.}
    \begin{tabular}{ c  c  c  c }
        \toprule
        Layer & Overview & Dimension & Activation \\
        \midrule
        Input & Embedding & 64 & - \\
        H1 & Fully connected (Layer Norm) & 128 & ReLU \\
        Output & Fully connected (Layer Norm) & 32 & ReLU \\
        \bottomrule
    \end{tabular}
    \label{tab:encoder_ces_architecture}

\end{table}

\begin{table}[t]
    \centering
    \captionof{table}{\textbf{CES model.} DAD decoder (emission) network}
    \begin{tabular}{ c  c  c  c }
        \toprule
        Layer & Overview & Dimension & Activation \\
        \midrule
        Input & Embedding &  32 & - \\
        H1 & Fully connected & 128 & ReLU \\
        Output & Fully connected & 6 & - \\
        \bottomrule
    \end{tabular}
    \label{tab:decoder_ces_architecture}
\end{table}

\begin{table}[t]
    \centering
    \caption{\textbf{CES model.} Parameters for training of the DAD network.}
    \begin{tabular}{lc}
        \toprule
        Parameter & Value  \\
        \midrule
        Batch size & 100  \\
        Number of negative samples & 1024 \\
        Number of gradient steps (default) & 50K \\
        Learning rate (LR) & $1\times10^{-4}$ - $1\times10^{-5}$  \\
        Annealing frequency & 1K \\
        \bottomrule
    \end{tabular}
    \label{tab:td_parameter_comparison_pretrain}
\end{table}

\begin{table}[t]
    \centering
    \caption{\textbf{CES model.} Parameters for Step-DAD policy finetuning.}
    \begin{tabular}{lc}
        \toprule
        Parameter & Value  \\
        \midrule
        Number of theta rollouts & 16  \\
        Number of posterior samples & 20K \\
        Finetuning learning rate (LR) & $1\times10^{-5}$  \\
        \bottomrule
    \end{tabular}
    \label{tab:ces_parameter_comparison_stepdad}
\end{table}

\end{document}